\newcommand{\er}[1]{\mbox{\rm\em #1}}
\newcommand{\xg}{HAG\xspace}
\newcommand{\xgs}{HAGs\xspace}
\newcommand{\mw}[1] {\mathcal{\widehat{#1}}}
\newcommand{\m}[1] {\mathcal{#1}}
\newtheorem{theorem}{Theorem}
\DeclareMathOperator*{\argmax}{arg\,max}
\newcommand{\hide}[1]{}
\title{Redundancy-Free Computation Graphs for\\ Graph Neural Networks}
\author{%
  Zhihao Jia \\
  Stanford University \\
  \texttt{zhihao@cs.stanford.edu}\\
  \And
  Sina Lin \\
  Microsoft \\
  \texttt{silin@microsoft.com}
  \And
  Rex Ying \\
  Stanford University \\
  \texttt{rexying@stanford.edu}
  \And 
  Jiaxuan You\\
  Stanford University\\
  \texttt{jiaxuan@stanford.edu}
  \And
  Jure Leskovec\\
  Stanford University\\
  \texttt{jure@cs.stanford.edu}\\
  \And
  Alex Aiken\\
  Stanford University\\
  \texttt{aiken@cs.stanford.edu}\\
}
\begin{document}

\maketitle

\begin{abstract}
Graph Neural Networks (GNNs) are based on repeated aggregations of information across nodes' neighbors in a graph. However, because common neighbors are shared between different nodes, this leads to repeated and inefficient computations.
%
We propose {\em Hierarchically Aggregated computation Graphs} (\xgs), a new GNN graph representation that explicitly avoids redundancy by managing intermediate aggregation results hierarchically, eliminating repeated computations and unnecessary data transfers in GNN training and inference.
We introduce an accurate cost function to quantitatively evaluate the runtime performance of different \xgs and use a novel \xg search algorithm to find optimized \xgs.
%
Experiments show that the \xg representation significantly outperforms the standard GNN graph representation by increasing the end-to-end training throughput by up to 2.8$\times$ and reducing the aggregations and data transfers in GNN training by up to 6.3$\times$ and 5.6$\times$, while maintaining the original model accuracy.
\hide{ 
Existing graph neural networks (GNNs) use ordinary graph representation that directly connects each vertex in a graph with its neighbors.
Each vertex computes its activations by aggregating its neighbors independently, resulting in redundant computation and unnecessary data transfers.
In this paper, we propose \xg, a new graph representation that explicitly manages intermediate aggregation results hierarchically, which reduces redundant computation and eliminates unnecessary data transfers.
We introduce a cost model to quantitatively evaluate the runtime performance of different \xgs and use novel graph search algorithm to find highly optimized \xgs under the cost model.
Our evaluation shows that the \xg representation significantly outperforms the standard graph representations by reducing computation costs and memory accesses for neighborhood aggregations by up to 84\% and YY\%, and increasing end-to-end training throughput by up to 1.9$\times$, while maintaining the original model accuracy.
}
\vspace{-3mm}
\end{abstract}

\section{Introduction}
\label{sec:intro}
Graph neural networks (GNNs) have shown state-of-the-art performance across a number of tasks with graph-structured data, such as social networks, molecule networks, and webpage graphs~\cite{GCN, GraphSAGE, DiffPool, GIN, CNLMF}. 
GNNs use a recursive neighborhood aggregation scheme --- in a GNN layer, each node aggregates its neighbors' activations from the previous GNN layer and uses the aggregated value to update its own activations.
The activations of the final GNN layer are used for prediction tasks, such as node classification, graph classification, or link prediction.

Due to the clustering nature of real-world graphs, different nodes in a graph may share a number of common neighbors.
For example, in webpage graphs, different websites under the same domain generally have a number of common links (i.e., neighbors).
As another example, in recommender systems, users in the same group may have interests in common items.

However, existing GNN representations do not capture these common neighbors in real-world graphs, leading to redundant and unnecessary computation in both GNN training and inference. 
In particular, existing GNN representations define computation in each GNN layer with a GNN {\em computation graph} (referred to as a GNN-graph). 
For each node $v$ in the input graph, the GNN-graph includes an individual tree structure to describe how to compute $v$'s activations by aggregating the previous-layer activations of $v$'s neighbors.
Figure~\ref{fig:intro}b shows the GNN-graph of the input graph in Figure~\ref{fig:intro}a;
for example, for node $A$, its neighbor's activations $h^{(k-1)}_B$, $h^{(k-1)}_C$ and $h^{(k-1)}_D$ from the layer $k-1$ are aggregated to compute new activations $h^{(k)}_A$ for the layer $k$ (see the top portion of Figure~\ref{fig:intro}b).
The new activations of the other nodes are computed similarly using the previous activations of their neighbors.
Notice that this representation results in redundant computation and data transfers.
In this small example, both $\{A,B\}$ and $\{C,D\}$ are aggregated twice.
In wider and mlulti-layer GNNs, the redundancies in existing GNN representations account for a significant fraction of all computation.


\begin{figure*}
    \centering
    \includegraphics[scale=0.45]{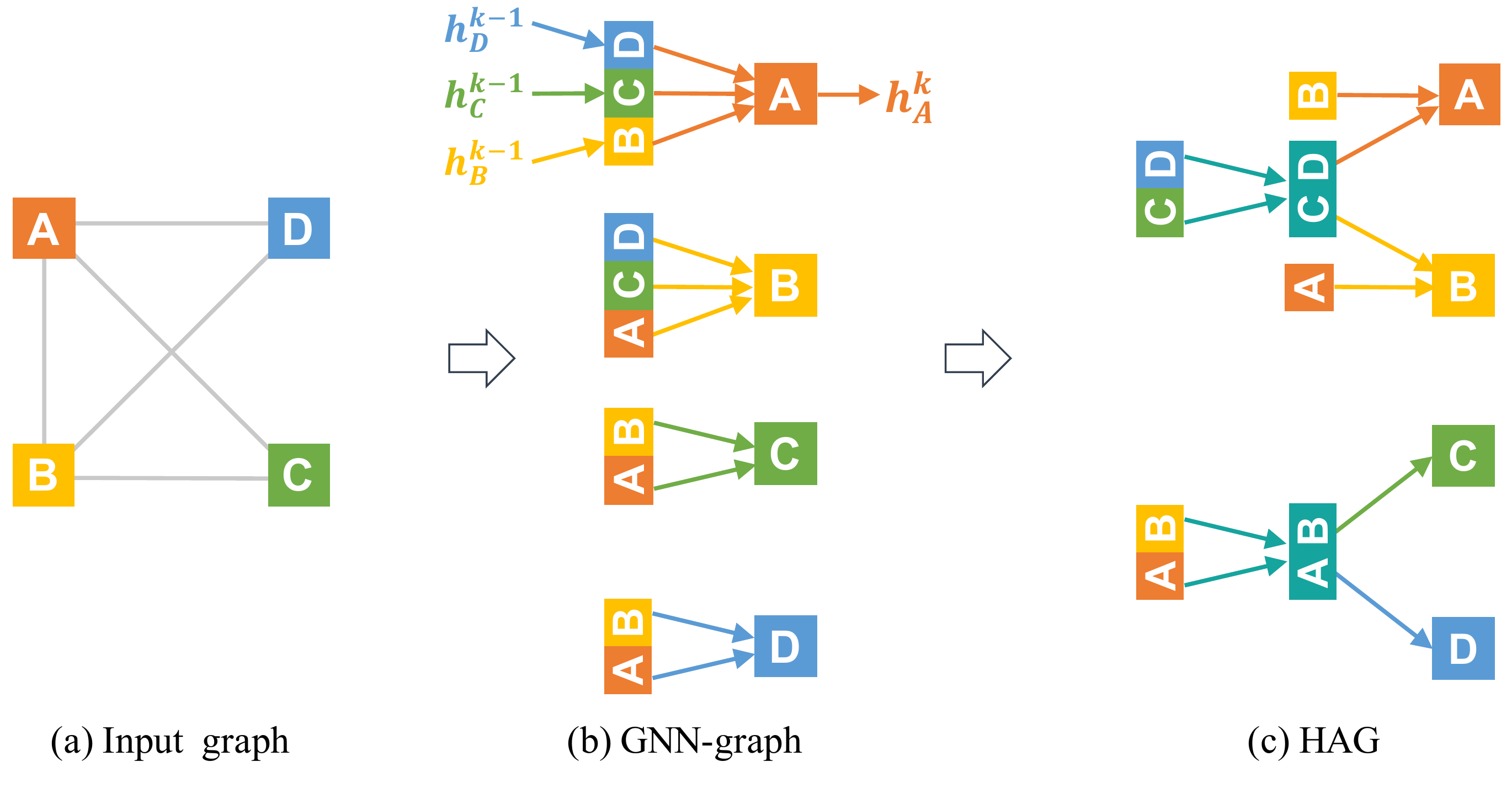}
    \caption{Comparison between a GNN-graph and an equivalent \xg.
    (a) Input graph; (b) 1-layer GNN computation graph (GNN-graph); (c) \xg that avoids redundant computation.
    The GNN-graph computes new activations $h^{(k)}_v$ by aggregating the previous-layer activations of $v$'s neighbors.
    Because nodes in the input graph share common neighbors, the GNN-graph performs redundant computation (e.g., both $\{A, B\}$ and $\{C, D\}$ are aggregated twice). (c) By identifying common computational patterns, the \xg avoids repeated computation.
    }
    \label{fig:intro}
\end{figure*}

In this paper, we propose a new GNN representation called {\em Hierarchically Aggregated computation Graphs} (\xgs).
Figure~\ref{fig:intro}c shows one possible \xg for the input graph in Figure~\ref{fig:intro}a. 
\xgs are functionally equivalent to standard GNN-graphs (produce the same output), but represent common neighbors across different nodes using aggregation hierarchies, which eliminates redundant computation and unnecessary data transfers in both GNN training and inference.
In addition, a \xg is agnostic to any particular GNN model, and can be used to eliminate redundancy for arbitrary GNNs.


For a GNN-graph, there exist numerous equivalent \xgs with different aggregation hierarchies and runtime performance.
Finding \xgs with optimized performance is challenging since the number of possible \xgs is exponential in the input graph size.
We introduce an accurate cost function to quantitatively estimate the performance of different \xgs and develop a novel \xg search algorithm to automatically find optimized \xgs.

Theoretically, we prove that the search algorithm can find \xgs with strong performance guarantees: (1) for GNNs whose neighborhood aggregations require a specific ordering on a node's neighbors, the algorithm can find a globally optimal \xg under the cost function; and (2) for other GNNs, the algorithm can find \xgs whose runtime performance is at least a $(1-1/e)$ approximation ($\approx 63\%$) of globally optimal \xgs using the submodularity property~\cite{mossel2007submodularity}.
Empirically, the algorithm finds highly optimized \xgs for real-world graphs, reducing the number of aggregations by up to 6.3$\times$.



Our \xg abstraction maintains the predictive performance of GNNs but leads to much faster training and inference. We evaluate the performance of \xgs on five real-world datasets and along three dimensions: (a) end-to-end training and inference performance; (b) number of aggregations; and (c) size of data transfers.
Experiments show that \xgs increase the end-to-end training and inference performance by up to 2.8$\times$ and 2.9$\times$, respectively. 
In addition, compared to GNN-graphs, \xgs reduce the number of aggregations and the size of data transfers by up to 6.3$\times$ and $5.6\times$, respectively.


To summarize, our contributions are:
\begin{itemize}[leftmargin=*]
    \setlength\itemsep{0em}
    \item We propose \xg, a new GNN graph representation to eliminate redundant computation and data transfers in GNNs.
    \item We define a cost model to quantitatively evaluate the runtime performance of different \xgs and develop a \xg search algorithm to automatically find optimized \xgs.
    Theoretically, we prove that the \xg search algorithm at least finds a $(1-1/e)$-approximation of globally optimal \xgs under the cost model.
    \item We show that \xgs significantly outperform GNN-graphs by increasing GNN training and inference performance by up to XX$\times$ and YY$\times$, respectively, and reducing the aggregations and data transfers in GNN-graphs by up to 6.3$\times$ and 5.6$\times$, respectively.
\end{itemize}

\section{Related Work}
\label{sec:related}
{\bf Graph neural networks}
have been used to solve various real-world tasks with relational structures~\cite{GCN, GraphSAGE, DiffPool, GIN, CNLMF}.
FastGCN~\cite{FastGCN} and SGC~\cite{SGC} accelerate GNN training using importance sampling and removing nonlinearilities.
This paper solves an orthogonal problem: how to optimize GNN efficiency while maintaining network accuracy.
\xg is agnostic to any particular GNN model and provides a general approach that can be automatically applied to eliminate redundancy for arbitrary GNN models.

{\bf Join-trees} are a tree decomposition technique that maps a graph into a corresponding tree structure to solve optimization problems on the graph, such as query optimization~\cite{query_optimization}.
Although a join-tree provides a possible way to find optimal \xgs for a GNN-graph, its time complexity is exponential in the {\em treewidth} of a GNN-graph~\cite{treewidth}, and real graphs tend to have very large treewidths.
For example, \cite{adcock2016tree} shows that the treewidth of real-world social networks grow linearly with the network size, making it infeasible to use join-trees to find optimal \xgs.

\paragraph{Computation reduction in neural networks.}
Several techniques have been proposed to reduce computation in neural networks, including weights pruning~\cite{Han1} and quantization~\cite{Han2}. 
These techniques reduce computation at the cost of modifying networks, resulting in decreased accuracy (as reported in these papers).
By contrast, we propose a new GNN representation that accelerates GNN training by eliminating redundancy in GNN-graphs while maintaining the original network accuracy.
%
%

\section{Hierarchically Aggregated Computation Graphs (\xgs)}
\label{subsec:graph}

\begin{table*}[t]
\caption{Existing GNNs described in our abstraction. GraphSAGE-P and GraphSAGE-LSTM are the pooling and LSTM variants of GraphSAGE, respectively. $\sigma$ and $\er{max}$ indicate element-wise non-linear activation and max functions.
For sequential \textproc{Aggregate}, $v_i$ denotes the $i$-th in-neighbor of node $v$.
}
\label{tab:gnns}
\begin{threeparttable}
\resizebox{\textwidth}{!}{
\begin{tabular}{lll}
\hline
{\bf GNN} & {\bf $\textproc{Aggregate}(\{h^{(k-1)}_u | u \in \mathcal{N}(v)\})$} & {\bf $\textproc{Update}(a^{(k)}_v, h^{(k-1)}_v)$}\\
\hline
\multicolumn{3}{c}{Set \textproc{Aggregate}} \\
\hline
GCN~\cite{GCN} & $a^{(k)}_v = \sum_{u\in\mathcal{N}(v)}{h^{(k-1)}_u} $ & $h^{(k)}_v = \sigma(W^{(k)} \cdot \frac{a^{(k)}_v + h^{(k-1)}_v}{|\mathcal{N}(v)| + 1})$ \\
GraphSAGE-P~\cite{GraphSAGE} & $a^{(k)}_v = {\er{max}}_{u\in\mathcal{N}(v)}\{\sigma(W^{(k)}_1 \cdot h^{(k-1)}_u)\}$ & $h^{(k)}_v = \sigma\big(W^{(k)}_2 \cdot (a^{(k)}, h^{(k-1)}_v)\big)$ \\
\hline
\multicolumn{3}{c}{Sequential \textproc{Aggregate}} \\
\hline
GraphSAGE-LSTM~\cite{GraphSAGE} & $a^{(k)}_v = \er{LSTM}(h^{(k-1)}_{v_1},...,h^{(k-1)}_{v_\mathcal{N}})$ & $h^{(k)}_v = \sigma\big(W^{(k)} \cdot (a^{(k)}_v, h^{(k-1)}_v)\big)$\\
$N$-ary Tree-LSTM~\cite{TreeLSTM} & $a^{(k)}_v = \er{Tree-LSTM-Agg}(h^{(k-1)}_{v_1},...,h^{(k-1)}_{v_\mathcal{N}})$ & $h^{(k)}_v = \er{Tree-LSTM-Update}(a^{(k)}_v, h^{(k-1)}_v)$\\
\hline
\end{tabular}
}
\end{threeparttable}
\end{table*}

\begin{algorithm}[t]
\caption{An abstraction for GNNs. $\mathcal{V}$ is the set of nodes in an input graph, and $\mathcal{N}(v)$ denotes the set of neighbors for node $v$.}
\label{alg1}
\begin{algorithmic}[1]
\State $h^{(0)}_v = x_v, \forall v \in \mathcal{V}$
\For {$k= 1 \textrm{ to } K$}
\For {$v \in {\mathcal{V}}$}
\State $a^{(k)}_v \leftarrow \Call{Aggregate}{\{h^{(k-1)}_u | u \in \mathcal{N}(v)\}}$
\State $h^{(k)}_v \leftarrow \Call{Update}{a^{(k)}_v, h^{(k-1)}_v}$
\EndFor
\EndFor
\State
\State {\bf Goal:} minimize $\mathcal{L}(\{h^{(K)}_v | v \in \mathcal{V}\})$
\end{algorithmic}
\end{algorithm}

\paragraph{GNN abstraction.}
A GNN takes an input graph and node features as inputs and iteratively learns representations for individual nodes over the entire graph through a number of GNN layers.
Algorithm~\ref{alg1} shows an abstraction for GNNs: $h^{(k)}_v$ is the learned activations of node $v$ at layer $k$, and we initialize $h^{(0)}_v$ with input node features $x_v$.
At the $k$-th layer, $a^{(k)}_v$ denotes the aggregated activations of $v$'s neighbors, which is combined with $h^{(k-1)}_v$ to compute an updated activation $h^{(k)}_v$.
The learned node activations of the final layer (i.e., $h^{(K)}_v$) are used for predictions, and a GNN model generally minimizes a loss function $\mathcal{L}$ that takes the final node activations as inputs (line 6).

Existing GNN models use a GNN {\em computation graph} (GNN-graph) to describe the computation in each GNN layer, as shown in Figure~\ref{fig:intro}b.
For each node $v$ in the input graph, the GNN-graph includes an individual tree structure to define how to compute the activations $h_v^{(k)}$ of node $v$ by aggregating the previous-layer activations of $v$'s neighbors (i.e., $h^{(k-1)}_u, u \in \mathcal{N}(v)$).
GNN-graphs are efficient at expressing direct neighborhood relations between nodes, 
but are not capable of capturing common neighbors across multiple nodes, leading to redundant computation in GNN training and inference.

\hide{
$\m{G}=(\langle\m{V}_S, \m{V}_A\rangle, \m{E})$ to describe neighborhood aggregations in a GNN layer.
Each node $v \in \m{V}_S$ denotes $v$'s activations at the previous layer (i.e., $h_v^{k-1)}$, and a node $u \in \m{V}_A$ corresponds to the aggregated activations of $u$'s neighbors (i.e., $a_u^{(k)}$ in Algorithm~\ref{alg1}). There is an edge from $v \in \m{V}_S$ to $u \in \m{V}_A$ if $v$ and $u$ are neighbors in the input graph.
Recall Figure~\ref{fig:intro}b, which shows an example standard computation graph.
This approach is efficient at capturing direct neighborhood relations between nodes but includes redundant computation and data transfers since aggregations on shared neighbors are performed multiple times.
}


\subsection{\xg Definition}
We propose {\em Hierarchically Aggregated computation Graphs} (\xgs) for GNNs, which eliminate redundancy in GNN-graphs by hierarchically managing and reusing intermediate aggregation results.
Compared to a GNN-graph, a \xg includes a new set of {\em aggregation} nodes, each of which represents the intermediate aggregations result for a subset of nodes (i.e., aggregation on a subset of $h^{(k-1)}_v$).
Similar to edges in GNN-graphs, an edge $(u, v)$ in a \xg denotes an aggregation relation --- computing $v$'s activations requires aggregating $u$'s activations.



Our \xg abstraction is general and applicable to many existing GNN models.
Table~\ref{tab:gnns} shows how to use our abstraction to define existing GNNs, which can be further divided into two categories.

\begin{itemize}[leftmargin=*]
\setlength\itemsep{0em}
\item {\bf Set \textproc{Aggregate}}. Most GNNs assume the neighbors of a node have {\em no ordering}, and the aggregations are {\em associative} and {\em commutative} operations that are invariant to the order in which the aggregations are performed. Examples include GCN with summation aggregations and GraphSAGE-P with element-wise pooling aggregations (Table~\ref{tab:gnns}).
Note that set aggregations in GNNs are designed to be order invariant and thus can be performed in a hierarchical fashion as we do in \xgs.
\hide{
}

\item {\bf Sequential \textproc{Aggregate}}. Another class of GNNs require a specific ordering of a node's neighbors and the aggregations are not commutative. 
Examples include $N$-ary Tree-LSTM~\cite{TreeLSTM} and the LSTM variant of GraphSAGE~\cite{GraphSAGE}.
However, \xgs can be applied in the case of sequential aggregations as well. 
Rather than identifying common subsets of neighbors, we identify the common prefixes of the sequence of aggregated nodes, which can then be reused among nodes.
\end{itemize}


We shall use $\m{V}$ to denote the nodes in the input graph and use $\m{V}_A$ to denote the aggregation nodes added in a \xg.
The standard GNN-graph representation can be considered as a special case in the \xg representation with no intermediate aggregation nodes (i.e., $\m{V}_A = \emptyset$). 
We further define two additional functions for each node:

First, $\er{aggr}(v)$ is the aggregation results of node $v$:
$$
\er{aggr}(v) = \begin{cases}
h_{v}^{(k-1)} & \mw{N}_v = \emptyset \\
\textproc{Aggregate}(\{\er{aggr}(u) | u \in \mw{N}_v\}) & \mw{N}_v \neq \emptyset
\end{cases}
$$
where $\mw{N}_v$ denotes the in-neighbors of node $v$ in a \xg.
Note that $\er{aggr}(\cdot)$ is recursively defined, and there exists a sequential ordering to evaluate $\er{aggr}(v)$ for all nodes since each \xg is acyclic.

\hide{
First, $\er{aggr}(v)$ is the result of aggregating $v$'s in-neighbors with an \textproc{Aggregate} function.
For a source node $v$ with no in-neighbors, $\er{aggr}(v)$ is $v$'s activations in the previous layer:
$$
\er{aggr}(v) = \begin{cases}
h^{(k-1)}_v & v \in \m{V}_S \\
\textproc{Aggregate}(\{\er{aggr}(u) | (u, v) \in \mathcal{\widehat{E}}\}) & v \in \m{V}_A \cup \m{V}_I
\end{cases}
$$
Note that $\er{aggr}(\cdot)$ is recursively defined, and there exist a sequential ordering to evaluate $\er{aggr}(v)$ for all nodes since $\widehat{\mathcal{G}}$ is acyclic.
}

Second, we use $\er{cover}(v)$ to describe how to compute $\er{aggr}(v)$ by using the input activations $h^{(k-1)}_u$ from the previous layer.
\begin{equation}
\er{aggr}(v) = \textproc{Aggregate}(\{h^{(k-1)}_u | u \in \er{cover}(v)\}
\end{equation}
$\er{cover}(v)$ defines the coverage of node $v$ in a \xg. For the \xg example in Figure~\ref{sec:intro}c, $\er{cover}(A) =  \{B, C, D\}$ because $h^{(k-1)}_B$, $h^{(k-1)}_C$, and $h^{(k-1)}_D$ are used as inputs to compute $h^{(k)}_A$.

For a set \textproc{Aggregate}, $\er{cover}(\cdot)$ is an unordered set: 
\begin{equation}
\label{eqn1}
\er{cover}(v) = \begin{cases}
\{v\} & \mw{N}_v = \emptyset \\
\{w | \exists u \in \mw{N}_v: w \in \er{cover}(u)\} & \mw{N}_v \neq \emptyset
\end{cases}
\end{equation}

For a sequential \textproc{Aggregate}, $\er{cover}(\cdot)$ is an ordered list:
\begin{equation}
\label{eqn2}
\er{cover}(v) = \big(\er{cover}(u_1), ..., \er{cover}(u_m)\big)
\end{equation}
where $u_1, ..., u_m$ are the ordered in-neighbors of $v$.



\subsection{GNNs with \xgs}
\begin{algorithm}[t]
\caption{A GNN abstraction with \xgs. $\widehat{a}_v$ denotes the result of $\er{aggr}(v)$ at a GNN layer. We exclude layer index superscripts in $\widehat{a}_v$ to denote that $\widehat{a}_v$ does not need to be memorized for back propagation,
and its memory can be reused across all layers.}
\label{alg2}
\begin{algorithmic}[1]
\State $h^{(0)}_v = x_v, \forall v \in \m{V}$
\For {$k= 1 \textrm{ to } K$}
\For {$v \in {\m{V}}$}
\State $\widehat{a}_v \leftarrow h^{(k-1)}_v$
\EndFor
\For {$v \in \m{V}_A$}
\State $\widehat{a}_v \leftarrow \Call{Aggregate}{\{\widehat{a}_u | u \in \mw{N}_v\}}$
\EndFor
\For {$v \in {\m{V}}$}
\State $a^{(k)}_v \leftarrow \Call{Aggregate}{\{\widehat{a}_u | u \in \mw{N}_v\}}$
\State $h^{(k)}_v \leftarrow \Call{Update}{a^{(k)}_v, h^{(k-1)}_v}$
\EndFor
\EndFor
\end{algorithmic}
\end{algorithm}

Existing GNNs are defined with GNN-graphs as shown in Algorithm~\ref{alg1}.
We extend the GNN abstraction in Algorithm~\ref{alg2} to make it also applicable to \xgs.
The extension does not require any modification to a GNN model, and the only difference is how to compute neighborhood aggregations (i.e., $a^{(k)}_v$) in each GNN layer.
In Algorithm~\ref{alg2}, we first compute the results of intermediate aggregation nodes and save the results in $\widehat{a}_v$ (line 5-6).
We then compute the neighborhood aggregations (i.e., $a^{(k)}_v$) for nodes in the input graph using the intermediate aggregation results $\widehat{a}_v$.

\paragraph{Memory overhead.}
Although Algorithm~\ref{alg2} includes new intermediate variables $\widehat{a}_v$, the memory overhead for storing $\widehat{a}_v$ is negligible since $\widehat{a}_v$ is not used for back propagation and can be saved in a constant memory across all GNN layers.
In the experiments, we show \xgs can increase the training throughput by $2.8\times$ at the cost of 0.1\% memory overhead.

We define a GNN-graph $\m{G}$ and a \xg $\mw{G}$ to be {\em equivalent} for a GNN model if (1) the GNN model outputs the same activations (i.e., $h^{(k)}_v$) at each GNN layer, and (2) the GNN model computes the same gradients for all trainable parameters in back propagation. 
We can use equivalent graphs interchangeably for both inference and training, since equivalent graphs produce the same outputs and gradients by definition.
Theorem~\ref{thm2} provides a necessary and sufficient condition for graph equivalence. We prove the theorem in Appendix.
\begin{theorem}
\label{thm2}
A GNN-graph $\m{G}$ and a \xg $\mw{G}$ are equivalent if and only if $\mathcal{N}(v) = \er{cover}(v)$ for all $v \in \m{V}$, where $\mathcal{N}(v)$ is $v$'s neighbors in the input graph and $\er{cover}(\cdot)$ is defined in Equation~\ref{eqn1} and~\ref{eqn2}.
\end{theorem}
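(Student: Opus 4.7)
The plan is to prove the two directions separately: sufficiency ($\Leftarrow$) is the substantive direction and follows from a structural induction, while necessity ($\Rightarrow$) follows by contraposition using a mild non-degeneracy property of \textproc{Aggregate}.

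For sufficiency, I would assume $\mathcal{N}(v) = \er{cover}(v)$ for all $v \in \m{V}$ and first prove the forward outputs agree by induction on the layer index $k$. The base case $k = 0$ is immediate since both Algorithms~\ref{alg1} and~\ref{alg2} initialize $h^{(0)}_v = x_v$. For the inductive step, assume the layer $k-1$ activations agree. In Algorithm~\ref{alg2} the initialization $\widehat{a}_v \leftarrow h^{(k-1)}_v$ for $v \in \m{V}$ sets up the leaves, and the aggregation nodes are then filled in topological order (which exists because a \xg is acyclic by construction). The key sub-lemma, which I would prove by structural induction on the DAG of aggregation nodes, asserts that for every node $v$ in the \xg,
\begin{equation*}
\er{aggr}(v) = \textproc{Aggregate}(\{h^{(k-1)}_u \mid u \in \er{cover}(v)\}).
\end{equation*}
For set \textproc{Aggregate} this uses associativity and commutativity, which are guaranteed for this class (Table~\ref{tab:gnns}); for sequential \textproc{Aggregate} it uses the fact that $\er{cover}(v)$ is defined in Equation~\ref{eqn2} as the ordered concatenation of the children's covers, together with the observation that a causal recurrence applied hierarchically over concatenated prefixes yields the same final state as a single unrolling on the full sequence. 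Combining the sub-lemma with the hypothesis $\mathcal{N}(v) = \er{cover}(v)$ gives that $a^{(k)}_v$ in Algorithm~\ref{alg2} equals $a^{(k)}_v$ in Algorithm~\ref{alg1}, and applying the identical \textproc{Update} yields matching $h^{(k)}_v$. Gradient equality then follows automatically: the loss $\mathcal{L}$ depends only on $\{h^{(K)}_v\}$, the trainable parameters appear in \textproc{Aggregate} and \textproc{Update} identically in both algorithms, and the chain rule therefore produces the same symbolic expression for $\partial\mathcal{L}/\partial\theta$ in both cases.

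For necessity, I would argue by contraposition. Suppose $\mathcal{N}(v) \neq \er{cover}(v)$ for some $v$, and take $k = 1$ so that the layer-$0$ inputs are the free node features. By the same sub-lemma, $a^{(1)}_v$ in Algorithm~\ref{alg2} is a function of $\{x_u \mid u \in \er{cover}(v)\}$, while in Algorithm~\ref{alg1} it is a function of $\{x_u \mid u \in \mathcal{N}(v)\}$. Using a mild non-degeneracy property of \textproc{Aggregate} --- every aggregator in Table~\ref{tab:gnns} (sum, element-wise max, LSTM, Tree-LSTM) is distinguishing on sets or sequences --- I can choose input features $\{x_u\}$ on which the two set-/sequence-indexed functions disagree, so $h^{(1)}_v$ differs between the two algorithms, contradicting equivalence.

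The main obstacle I anticipate is cleanly handling the sequential case of the sub-lemma: one must verify that hierarchical composition of a non-commutative recurrence over concatenated prefixes produces the same state as a single unrolling on the concatenation, which requires an explicit appeal to the causal/stateful structure of the specific recurrence rather than a generic algebraic identity. A secondary subtlety is articulating the non-degeneracy property used in the necessity direction without expanding the theorem statement; restricting attention to the aggregators in Table~\ref{tab:gnns} --- each of which is separately easy to verify as distinguishing on sets or sequences --- is the cleanest way to discharge this.
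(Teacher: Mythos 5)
Your proposal is correct, and on the half of the theorem the paper actually proves it follows essentially the same route: induction on the layer index $k$, using the identity $\er{aggr}(v) = \textproc{Aggregate}(\{h^{(k-1)}_u \mid u \in \er{cover}(v)\})$ to show that Algorithm~\ref{alg2} and Algorithm~\ref{alg1} compute the same $a^{(k)}_v$, hence (via the common \textproc{Update}) the same $h^{(k)}_v$. The differences are all in completeness, and they favor you. First, the paper takes that identity as essentially definitional (it is the displayed equation relating $\er{aggr}$ and $\er{cover}$) and substitutes $\er{cover}(v)=\mathcal{N}(v)$ in one line; you prove it as a sub-lemma by structural induction over the aggregation DAG, treating the set case (associativity and commutativity) and the sequential case (causal recurrence over concatenated prefixes) separately, which is where the real mathematical content sits. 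Second, the paper proves only sufficiency: its proof opens by asserting that the ``if'' direction suffices, and the ``only if'' half of the stated equivalence is never argued. You supply it by contraposition, and you correctly recognize that it cannot hold for an arbitrary \textproc{Aggregate} --- a degenerate (e.g., constant) aggregator would make graphs with different covers equivalent --- so a distinguishing/non-degeneracy property of the aggregators in Table~\ref{tab:gnns} must be invoked; this is a genuine hypothesis that the theorem statement suppresses and that your proof makes visible. Third, you explicitly note that gradient equality (condition (2) of the paper's definition of equivalence) follows because the two algorithms compute identical functions of the inputs and parameters; the paper's proof silently ignores that condition.

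One caveat you share with the paper: in the set case, the sub-lemma holds only if the covers of $v$'s in-neighbors are pairwise disjoint or the aggregator is idempotent (like element-wise max). Since $\er{cover}(v)$ is defined as a set union, a non-idempotent aggregator such as GCN's sum would double-count any element lying in two children's covers, and associativity plus commutativity alone do not exclude this. Both your structural induction and the paper's one-line substitution implicitly restrict to \xgs in which the in-neighbors' covers partition $\er{cover}(v)$; the \xgs produced by Algorithm~\ref{alg3} do satisfy this, since each merge replaces the pair of in-edges $(v_1,u),(v_2,u)$ by the single edge $(w,u)$. Flagging this restriction would strengthen your write-up, but since the paper's own proof has the identical unstated assumption, it does not count against you relative to the paper.
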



Equivalent graphs achieve the same model accuracy but have different runtime performance. 
Theorem~\ref{thm2} provides an efficient way to check equivalence between GNN-graphs and \xgs, and can be used as an oracle to search for optimized \xgs for any GNN-graph.

\section{\xg Search Algorithm}
For an arbitrary GNN model and an input GNN-graph, our goal is to find an equivalent \xg with optimized runtime performance.
We define a realistic cost function to quantitatively evaluate the runtime performance of arbitrary \xgs, \hide{(\Cref{subsec:cost})} and introduce a \xg search algorithm that automatically finds an optimized \xg with the following theoretical guarantees: \hide{(\Cref{subsec:greedy})}
\begin{itemize}[leftmargin=*]
\item For GNNs with sequential \textproc{Aggregate}, the \xg search algorithm can find {\em globally optimal} \xgs under the cost function.
\item For GNNs with set \textproc{Aggregate}, finding an optimal \xg is NP-hard by a reduction from the NP-hard {\em maximum coverage problem} (see Appendix for the proof). 
The search algorithm finds at least a {\em $(1-1/e)$-approximation} of globally optimal \xgs based on the submodularity property~\cite{mossel2007submodularity}.
\end{itemize}
\subsection{Cost Function}
\label{subsec:cost}
We introduce a realistic cost function that quantitatively evaluates the runtime performance of a \xg by measuring the computation cost to perform one epoch GNN training on the \xg.

The computation cost of a GNN model includes aggregating the neighbors of each node by calling \textproc{Aggregate} and updating the activations of each node via \textproc{Update}, as shown in Algorithm~\ref{alg2}. 
For a GNN model $\m{M}$, we assume the cost of performing \textproc{Aggregate} on two elements is $\alpha_{\m{M}}$, and the cost of computing an \textproc{Update} is $\beta_{\m{M}}$.
In Algorithm~\ref{alg2}, computing $\widehat{a}_v$ with $|\mathcal{\widehat{N}}_v|$ neighbors requires performing $(|\mathcal{\widehat{N}}_v|-1)$ binary aggregations, whose cost is $\alpha_{\m{M}}\times(|\mathcal{\widehat{N}}_v|-1)$.
Therefore, the total computation cost of training a GNN model $\mathcal{M}$ on a \xg $\mw{G}$ is
\begin{eqnarray*}
\er{cost}(\mathcal{M}, \mathcal{\widehat{G}}) & = &\sum_{v \in \m{V} \cup \m{V}_A} \alpha_{\m{M}} (|\mathcal{\widehat{N}}_v| - 1) + \sum_{v \in \m{V}} \beta_{\m{M}}
 =  \alpha_{\mathcal{M}}\big(|\mw{E}| - |\m{V}_A|\big) + (\beta_{\mathcal{M}} - \alpha_{\mathcal{M}}) |\m{V}|
\end{eqnarray*}
Since $|\mathcal{V}|$ is determined by the input graph, our goal is to minimize $\big(|\mathcal{\widehat{E}}| -  |\mathcal{\widehat{V}}_A| \big)$ as much as possible.

\subsection{Search Algorithm}
\label{subsec:greedy}
\begin{algorithm}[t]
\caption{A \xg search algorithm to automatically find an equivalent \xg for a GNN-graph with optimized runtime performance.
\textproc{Redundancy}($v_1, v_2, \mw{E}$) calculates the number of nodes aggregating both $v_1$ and $v_2$.
$\m{V}_A$ is the set of aggregation nodes in a \xg. 
Recall that $\er{cover}(u)$ is an ordered list for sequential \textproc{Aggregate} (see Equation~\ref{eqn2}).}
\label{alg3}
\begin{algorithmic}[1]
\State {\bf Input: } A GNN-graph $\mathcal{G}$ and a GNN model $\m{M}$.
\State {\bf Output: } An equivalent \xg 
\State 
\Function{Redundancy}{$v_1, v_2, \mw{E}$}
\If {$\m{M}$ has a set \textproc{Aggregate}}
\State $\m{R} = \{ u | (v_1, u) \in \mw{E} \wedge (v_2, u) \in \mw{E}\}$
\Else
\State $\m{R} = \{ u | v_1 = \er{cover}(u)[1] \wedge v_2 = \er{cover}(u)[2]\}$
\EndIf
\State \textbf{return} $|\mathcal{R}|$
\EndFunction
\State
\State $\m{V}_A \leftarrow \emptyset, \mw{E} \leftarrow \mathcal{E}$
\While {$|\m{V}_A| < \er{capacity}$}
\State $(v_1, v_2) = \argmax_{v_1, v_2}$ \Call{Redundancy}{$v_1, v_2, \mw{E}$}
\If {$\Call{Redundancy}{v_1, v_2, \mw{E}} > 1$}
\State $\m{V}_A \leftarrow \m{V}_A + \{w\}$ \Comment{where $w$ is a new node}
\State $\mw{E} \leftarrow \mw{E} + (v_1, w) + (v_2, w)$
\For {$u \in \m{V}$}
\If {$(v_1, u) \in \mw{E} \wedge (v_2, u) \in \mw{E}$}
\State $\mw{E} \leftarrow \mw{E} - (v_1, u) - (v_2, u) + (w, u)$
\EndIf
\EndFor
\EndIf
\EndWhile
\State {\bf return } $(\m{V}_A \cup \m{V}, \mw{E})$
\end{algorithmic}
\end{algorithm}

We present a \xg search algorithm that finds a globally optimal \xg for GNNs with sequential \textproc{Aggregate} and a $(1-1/e)$-approximation of globally optimal \xgs for GNNs with set \textproc{Aggregate}.
In addition to an input GNN-graph and a GNN model, the algorithm also takes a hyper-parameter {\em capacity}, defining an upper limit on the number of intermediate aggregation nodes (i.e., $|\m{V}_A|$).

Algorithm~\ref{alg3} shows the pseudocode of the \xg search algorithm.
We start with an input GNN-graph, and iteratively insert aggregation nodes into the current \xg to merge highly redundant aggregations and remove unnecessary computation and data transfers.

In each iteration, we find a binary aggregation with the highest redundancy and insert a new aggregation node $w$ in $\m{V}_A$ to represent the binary aggregation results (line 12-15).
All nodes containing this binary aggregation can directly use the output of $w$ without recomputing the aggregation (line 16-18).
The \xg search algorithm iteratively reduces the computation cost of the \xg by eliminating the most redundant binary aggregation in each iteration.

For a GNN model with a sequential \textproc{Aggregate}, Theorem~\ref{thm3} shows that ~\Cref{alg3} finds an equivalent \xg with globally optimal computation cost. We prove the theorem in Appendix.

\begin{theorem}
\label{thm3}
For any GNN-graph $\m{G}=(\m{V}, \m{E})$ and any GNN model $\m{M}$ with a sequential \textproc{Aggregate}, Algorithm~\ref{alg3} returns an equivalent \xg with globally minimized cost as long as $\er{capacity}\geq |\m{E}|$.
\end{theorem}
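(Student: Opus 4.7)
The strategy is to reduce the theorem to a counting problem, establish a structural lower bound on the cost of any equivalent \xg, and show that Algorithm~\ref{alg3} attains this bound. First I would reformulate the objective: by the cost expression of \Cref{subsec:cost}, minimizing $|\mathcal{\widehat{E}}| - |\mathcal{V}_A|$ is equivalent, up to a constant depending only on $|\mathcal{V}|$, to minimizing the total number of binary \textproc{Aggregate} calls in the \xg, since each node $v$ with $|\mathcal{\widehat{N}}_v|$ in-neighbors contributes $|\mathcal{\widehat{N}}_v|-1$ such calls.

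Next I would establish the lower bound. Because $\textproc{Aggregate}$ is sequential and hence non-commutative, the equivalence condition $\er{cover}(v)=\mathcal{N}(v)$ from \Cref{thm2} forces $\er{aggr}(v)$, for each $v\in\mathcal{V}$, to be computed as the unique left-to-right binary reduction of the ordered sequence $\mathcal{N}(v)$. Every prefix of length at least two appearing along this reduction must therefore be materialized as some node's aggregation value, and two such prefixes can share the same intermediate value if and only if they are identical. Letting $P^*$ denote the set of distinct length-${\ge}2$ prefixes across $\{\mathcal{N}(v):v\in\mathcal{V}\}$, any equivalent \xg performs at least $|P^*|$ binary aggregations.

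To close the gap, I would show by induction on the iterations of \Cref{alg3} that the nodes currently in $\mathcal{V}_A$ are in bijection with distinct elements of $P^*$, and that the ordered in-neighbor list of every node encodes the corresponding left-associative reduction. Each merge of a pair $(v_1,v_2)$ with redundancy $r\ge 2$ introduces exactly one new element of $P^*$ as an aggregation node and reroutes the first two in-neighbors of the $r$ affected nodes through it, preserving equivalence by \Cref{thm2}. The algorithm halts only once every remaining first-two pair has redundancy at most one; at that point each prefix in $P^*$ is either materialized as a node of $\mathcal{V}_A$ or computed once inline at the unique node that uses it, for a total of exactly $|P^*|$ binary aggregations, matching the lower bound.

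The main obstacle will be justifying the $\argmax$ ordering, since greedy algorithms are easily suboptimal when choices interact. I would handle this with an exchange argument showing that merges in disjoint branches of the emerging prefix trie are independent: any two pairs $(v_1,v_2)$ and $(v_1',v_2')$ with redundancy $\ge 2$ can be processed in either order without altering the set of mergeable pairs available afterwards, so the terminal state --- where every shared length-${\ge}2$ prefix has been materialized exactly once --- is invariant under the tie-breaking. Finally, the capacity condition is satisfied because each iteration adds at most one aggregation node and the number of merges is bounded by $|P^*|\le\sum_{v\in\mathcal{V}}\max(|\mathcal{N}(v)|-1,\,0)\le|\mathcal{E}|$, so $\er{capacity}\ge|\mathcal{E}|$ is never binding.
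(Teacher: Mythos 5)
Your proposal is correct and takes essentially the same approach as the paper's proof: both reduce cost minimization to counting binary aggregations, lower-bound the cost of any equivalent \xg by the number of distinct prefixes (of length at least two) of the ordered neighbor lists, and then show that Algorithm~\ref{alg3} attains exactly this bound. The only substantive difference is presentational --- the paper proves attainment by contradiction (the output can contain no aggregation that is not a prefix and no prefix computed twice), whereas you prove it via an induction invariant plus an exchange/confluence argument; yours is a more detailed rendering of the same fact.
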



For a GNN model with a set \textproc{Aggregate}, Theorem~\ref{thm4} shows that \Cref{alg3} finds a \xg that is at least a $(1-1/e)$-approximation of the globally optimal \xgs. We prove the theorem in Appendix.
\begin{theorem}
\label{thm4}
For any GNN-graph $\m{G}$ and any GNN model $\m{M}$ with a set \textproc{Aggregate}, Algorithm~\ref{alg3} gives a $(1-1/e)$-approximation of globally optimal \xgs under the cost function. More specifically, let $\mw{G}$ be the \xg returned by Algorithm~\ref{alg3}, and $\mw{G}_o$ is a globally optimal \xg under the $\er{capacity}$ constraint, we have
$$
\er{cost}(\m{M}, \mw{G}) \leq \frac{1}{e} \er{cost}(\m{M}, \m{G}) + \frac{e-1}{e} \er{cost}(\m{M}, \mw{G}_o)
$$
\end{theorem}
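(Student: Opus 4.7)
The plan is to recast the statement as cardinality-constrained monotone submodular maximization, then invoke the classical greedy $(1 - 1/e)$-approximation of Nemhauser--Wolsey--Fisher, as used in~\cite{mossel2007submodularity}. First I would convert the claim into a maximization. Using the closed form from \Cref{subsec:cost}, define the \emph{savings}
$$f(\mw{G}) := \er{cost}(\m{M}, \m{G}) - \er{cost}(\m{M}, \mw{G}) = \alpha_{\m{M}}\bigl(|\m{E}| - |\mw{E}| + |\m{V}_A|\bigr).$$
Elementary algebra shows that the stated bound on $\er{cost}(\m{M}, \mw{G})$ is equivalent to $f(\mw{G}) \geq (1 - 1/e)\,f(\mw{G}_o)$.

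Next I would reinterpret \Cref{alg3} as greedy selection of binary-aggregation pairs. The same edge-bookkeeping that yielded the cost formula shows that inserting an aggregation node $w$ for a pair $(v_1, v_2)$ with current redundancy $r$ increases $|\m{V}_A|$ by $1$ and changes $|\mw{E}|$ by $2 - r$, so the marginal gain in $f$ is exactly $\alpha_{\m{M}}(r - 1)$. Hence line~13 of \Cref{alg3}, which picks the pair of largest redundancy, is precisely the greedy rule that maximizes the marginal increment of $f$ under the cardinality constraint $|\m{V}_A| \leq \er{capacity}$. Moreover, the algorithm only installs a pair when $r \geq 2$, so every accepted step strictly increases $f$.

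The main obstacle is establishing that $F(S) := f(\mw{G}_S)$, where $\mw{G}_S$ is the \xg induced by a set $S$ of binary-pair insertions (each with $r \geq 2$ at its insertion time), is monotone and submodular. Monotonicity is immediate from the previous paragraph. For submodularity it suffices to prove the diminishing-returns inequality $F(S \cup \{p\}) - F(S) \geq F(T \cup \{p\}) - F(T)$ for all $S \subseteq T$, which reduces to showing that the redundancy $r_S(p)$ of any candidate pair $p = (c, d)$ is non-increasing in $S$. The key combinatorial observation is that installing an aggregation node $w$ for a pair $(a, b)$ only rewrites the in-neighbor set of each downstream node $u$ containing both $a$ and $b$, by replacing $\{a, b\}$ with $\{w\}$; this operation can only \emph{remove} nodes from, and never add nodes to, the set of downstream nodes simultaneously containing $c$ and $d$. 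A short case analysis on $|\{a, b\} \cap \{c, d\}| \in \{0, 1, 2\}$ confirms this in each case, yielding submodularity of $F$.

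Finally, applying the Nemhauser--Wolsey--Fisher theorem to the monotone submodular $F$ under the cardinality constraint $|\m{V}_A| \leq \er{capacity}$ gives $F(\mw{G}) \geq (1 - 1/e)\,F(\mw{G}_o)$, and substituting back the definition of $f$ recovers the bound of \Cref{thm4}. A small technical subtlety is that later greedy steps may form pairs involving previously inserted aggregation nodes, so the ground set must be enlarged to include such recursively defined pairs; because $|\m{V}_A| \leq \er{capacity}$ is finite, the enlarged ground set is still finite and the NWF analysis goes through unchanged.
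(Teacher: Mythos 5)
Your proposal follows essentially the same route as the paper's own proof: the paper defines the identical savings function $f(\mw{G}) = \er{cost}(\m{M},\m{G}) - \er{cost}(\m{M},\mw{G}) = \alpha_{\m{M}}(|\m{E}| - |\mw{E}| + |\m{V}_A|)$, argues that $f$ is monotone and submodular with respect to inclusion of aggregation-node sets, and then obtains the $(1-1/e)$ bound from the greedy choice of the most redundant pair — the only difference being that the paper re-derives the Nemhauser--Wolsey--Fisher induction inline (showing $f(\mw{G}_o) - f(\mw{G}^{(i)}) \leq (1-1/k)^i f(\mw{G}_o)$) rather than citing it. If anything, your treatment is more careful than the paper's: your explicit marginal-gain computation $\alpha_{\m{M}}(r-1)$, the case analysis for why redundancies are non-increasing, and the remark about enlarging the ground set to cover pairs involving previously inserted aggregation nodes address points the paper's proof simply asserts or passes over.
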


\paragraph{Time complexity.}
\hide{Finding the binary aggregation with the highest redundancy in each iteration could be computationally expensive, since a brute-force approach requires enumerating all node pairs.}
\hide{We use a {\em heap} to maintain the redundancy score of each potential node pair and only update the heap when we add and remove edges in $\mw{E}$.}
\hide{Since the depth of the heap is at most $O(\log|\m{V}|)$~\footnote{This is because there can be at most $O(|\m{V}|^2)$ node pairs.}, querying the most redundant binary aggregation and modifying $\mw{E}$ each takes $O(\log|\m{V}|)$ time.}
The overall time complexity of \Cref{alg3} is $O(\er{capacity} \times |\m{V}| + |\m{E}| \times \log|\m{V}|)$ (see Appendix for the proof).

\hide{
In addition to reducing computation costs, \xgs produced by~\Cref{alg3} have two other advantages.

\paragraph{Fast GPU implementation.}
Real-world graphs have non-uniform edge distributions, leading to unbalanced workload among different nodes.
Previous work~\cite{NGra, Lux} has proposed different strategies to explicitly balance workload distributions among nodes at the cost of synchronization overhead among GPU threads.
In contrast,~\Cref{alg3} produces \xgs whose aggregation nodes (i.e., $\m{V}_A$) have uniform edge distributions (each has exactly two in-edges).
This eliminates any synchronization overheads to balance workload among aggregation nodes and results in faster GPU implementations.

\paragraph{High reusability.}
For a given GNN-graph, the \xg procedure by \Cref{alg3} only depends on the capacity and aggregation type (set or sequential \textproc{Aggregate}) and is agnostic to any particular GNN models.
This allows us to only run the search algorithm once for each aggregation type, and any GNN models can directly reuse the generated \xgs without any additional analysis of the graph.
}

\section{Experiments}
Our \xg abstraction maintains predictive performance of GNNs but leads to much faster runtime performance. 
This section evaluates the runtime performance of \xgs on five real-world graph datasets.
We evaluate \xgs along three dimensions: (a) end-to-end training and inference performance; (b) number of aggregations; and (c) size of data transfers.

\subsection{Implementation}
Existing frameworks such as TensorFlow~\cite{Tensorflow} and PyTorch~\cite{Pytorch} are designed for spatial data structures (e.g., images and text), and have limited support for irregular data structures such as graphs. 
As a result, GNN models in existing frameworks translate graph structures to sparse adjacent matrices and use matrix operations to perform GNN training.

We implemented the following operations in TensorFlow r1.13 to support GNN training with \xgs.
First, {\tt graph\_to\_hag} automatically transforms an input GNN-graph to an equivalent \xg with optimized performance.
Second, {\tt hag\_aggregate} takes a HAG and nodes' activations as inputs, and computes the aggregated activations of all nodes. 
Finally, {\tt hag\_aggregate\_grad} computes the gradients of {\tt hag\_aggregate} for back propagation.

Our implementation minimizes changes to existing GNN programs: a GNN application can directly use all \xg optimizations by only modifying a few lines of code.

\subsection{Experimental Setup}
\label{sec:exp}
\begin{wraptable}{r}{0.5\linewidth}
\centering
\vspace{-6mm}
\caption{Datasets used in the experiments.}
\vspace{-2mm}
\label{tab:datasets}
\resizebox{0.75\linewidth}{!}{
\begin{tabular}{|l|l|l|l|}
\hline
{\bf Name} & {\bf \# Nodes} & {\bf \# Edges} \\
\hline
\multicolumn{3}{|c|}{Node Classification} \\
\hline
BZR~\cite{BZR} & 6,519 & 137,734\\
PPI~\cite{PPI} & 56,944 & 1,612,348\\
REDDIT~\cite{GraphSAGE} & 232,965 & 57,307,946\\
\hline
\multicolumn{3}{|c|}{Graph Classification}\\
\hline
IMDB~\cite{COLLAB} & 19,502 & 197,806\\
COLLAB~\cite{COLLAB} & 372,474 & 12,288,900\\
\hline
\end{tabular}
}
\end{wraptable}


\paragraph{Datasets.} 
Table~\ref{tab:datasets} summarizes the public datasets used in our experiments.
BZR is a chemical compound dataset, where each node is an atom and an edge is a chemical bond between two atoms~\cite{BZR}.
PPI contains a number of protein-protein interaction graphs, each of which corresponds to a different human tissue~\cite{PPI}.
REDDIT is an online discussion forum dataset, with each node being a Reddit post and each edge being commenting relations. For both PPI and REDDIT, we directly use prepossessed data from~\citet{GraphSAGE}.
IMDB and COLLAB are two collaboration datasets for graph classification~\cite{COLLAB}.
IMDB is a movie collaboration dataset, with each node representing an actor/actress, while COLLAB is a scientific collaboration dataset, with each node representing a researcher.


All experiments were performed running TensorFlow r1.13 on NVIDIA Tesla V100 GPUs.
Following previous work~\cite{GCN, GraphSAGE}, each GNN model has two GNN layers and one SoftMax layer.
For graph classification datasets, each GNN model also includes a mean-pooling layer to gather graph-level activations.
For all experiments, we set the maximum \er{capacity} of $|\m{V}_A|$ in a \xg to be $|\m{V}| / 4$, which achieves high performance on real-world graphs.
\hide{In all experiments, the memory overhead to save intermediate aggregation results is negligible: intermediate nodes consume 6MB of memory in the worst case while GNN training requires more than 7GB of memory ($\sim$0.1\% memory overhead). }

\begin{wrapfigure}{r}{0.5\linewidth}
    \centering
    \vspace{-10mm}
    \includegraphics[scale=0.32]{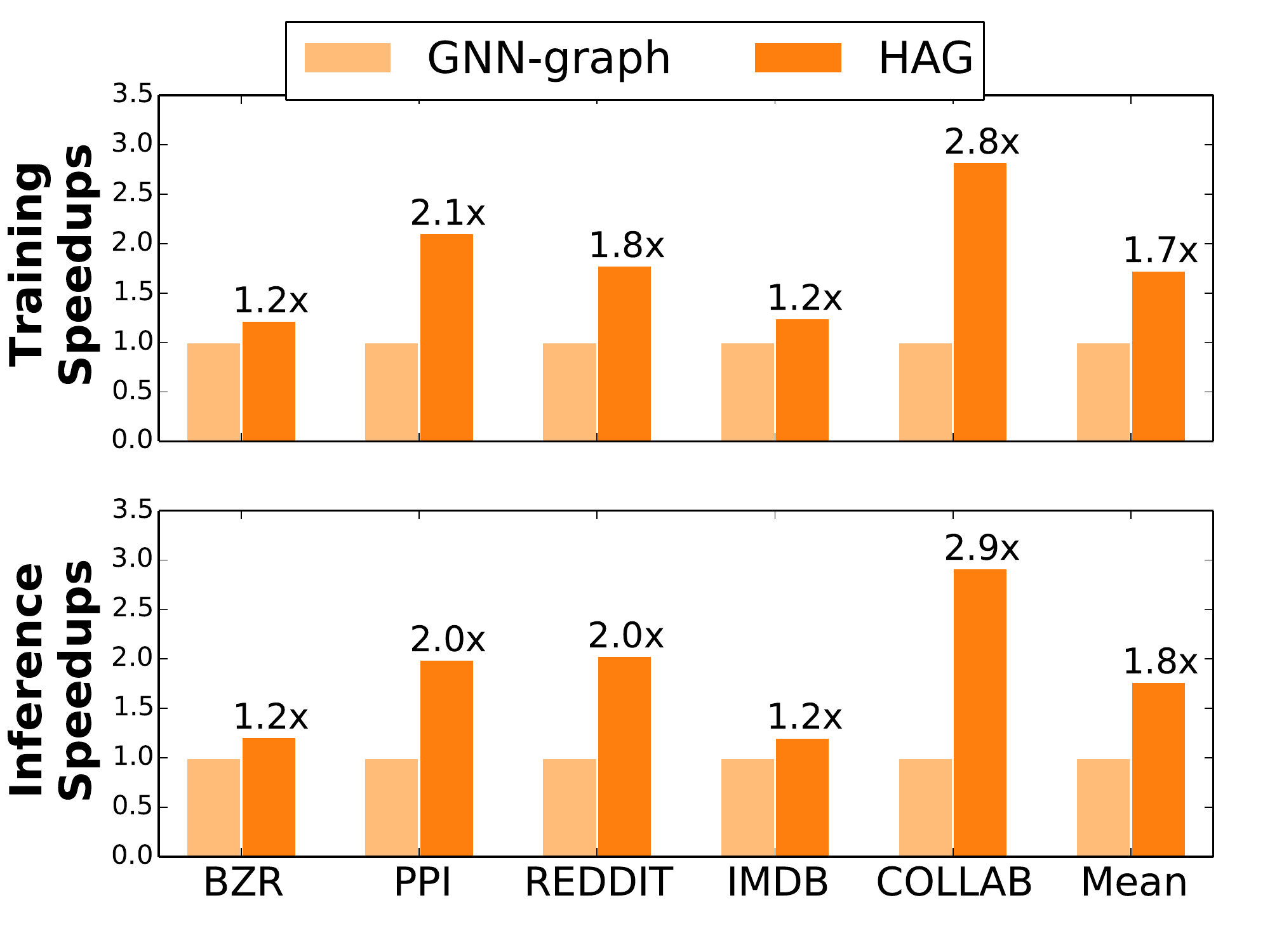}
    \vspace{-2mm}
    \caption{End-to-end performance comparison between GNN-graphs and \xgs. 
    We measure the per-epoch training time and inference latency on a 2-layer GCN model with 16 hidden dimensions in each layer.
    The performance numbers are normalized by the GNN-graph numbers.}
    \vspace{-6mm}
    \label{fig:compare_training}
\end{wrapfigure}

\subsection{End-to-End Performance}
\label{subsec:eval_end}
We first measure the per-epoch training time and inference latency to run a 2-layer GCN model on different graph datasets. We follow previous work~\cite{GraphSAGE, BZR, COLLAB} to split the datasets into training/validation/testing sets, and use the testing sets to measure the inference latency.

Figure~\ref{fig:compare_training} compares the per-epoch training time and inference latency between GNN-graphs and \xgs.
%
Compared to GNN-graphs, \xgs can improve the training and inference performance by up to 2.8$\times$ and 2.9$\times$, respectively, while maintaining the same network accuracy.
We note this improvement is achieved completely automatically, and computing a \xg is inexpensive.
Thus, because the improvement is essentially for free, we believe there is no reason not to use \xgs in preference to GNN-graphs.


\begin{figure}[t]
    \vspace{-4mm}
    \centering
    \subfloat[Set Aggregations.]{
    \includegraphics[scale=0.3]{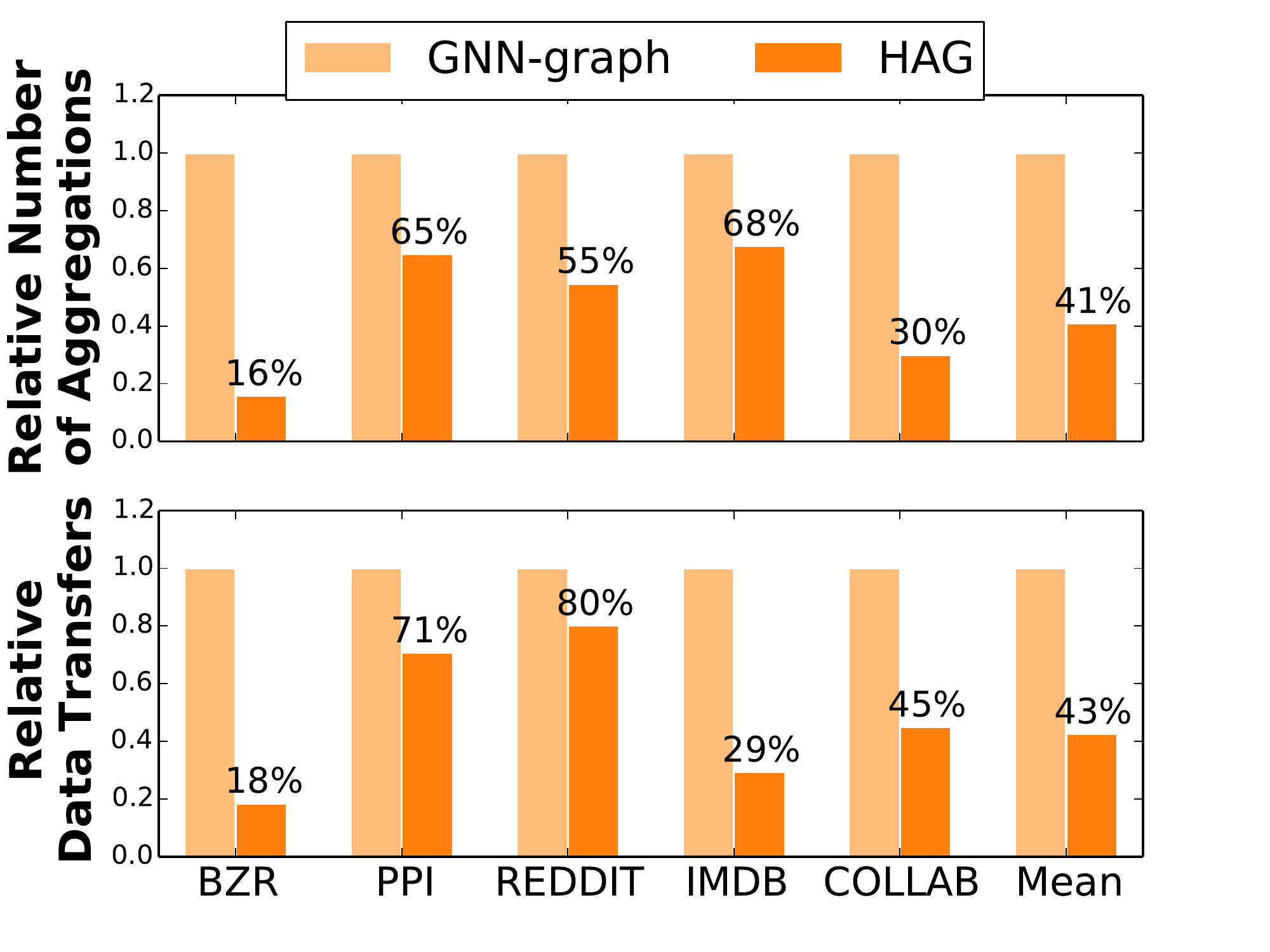}
    }
    \subfloat[Sequential Aggregations.]{
    \includegraphics[scale=0.3]{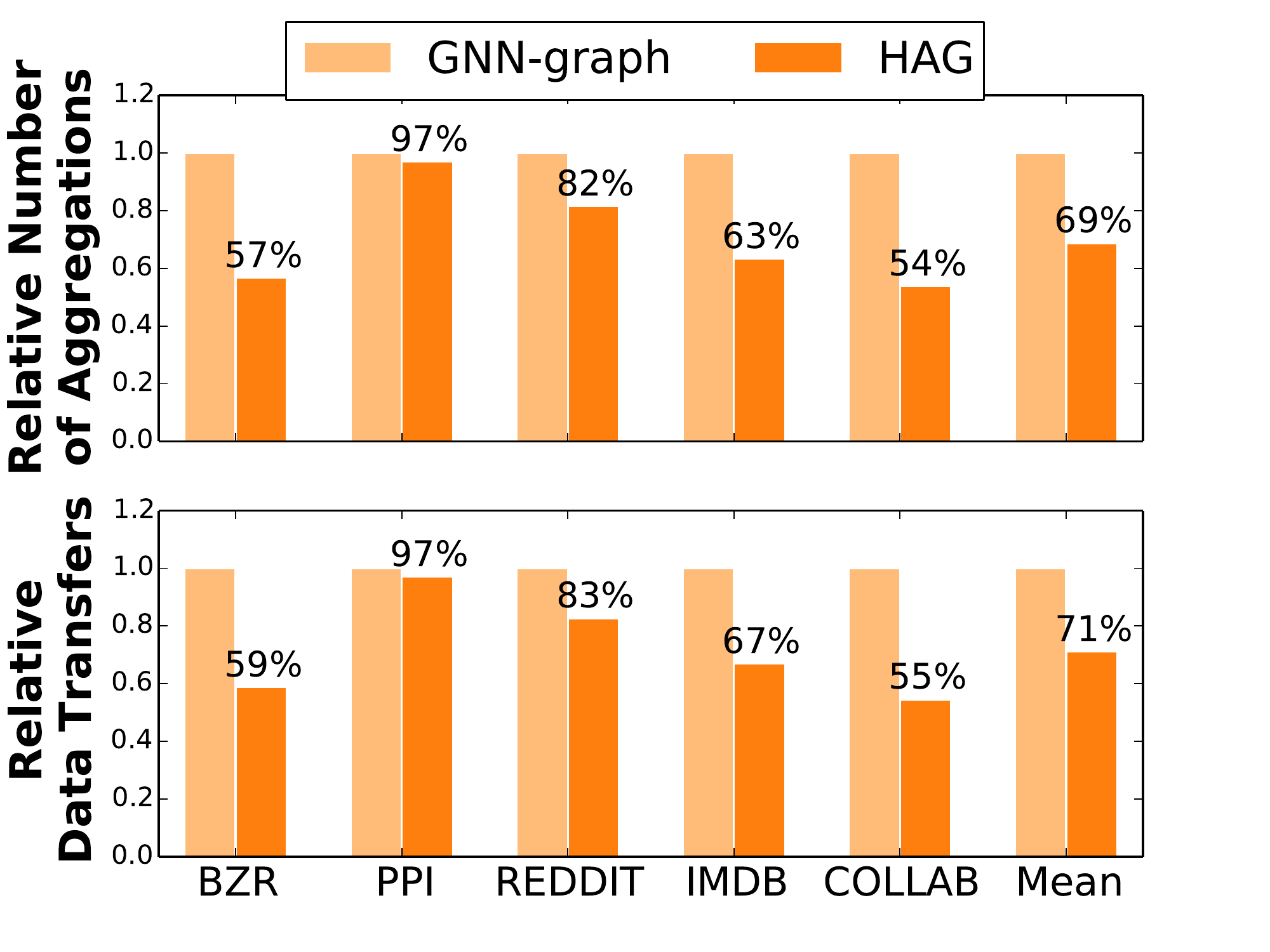}
    }
    \vspace{-2mm}
    \caption{Comparing the number of aggregations and amount of data transfers between GPU threads to perform aggregations (lower is better). 
    The y-axes are normalized by GNN-graphs, and the last column in each figure is the geometry mean over all datasets.
    }
    \label{fig:comapre_aggregation}
    \vspace{-6mm}
\end{figure}

\subsection{Aggregation Performance}
\label{subsec:eval_agg}
We further compare the aggregation performance of GNN-graphs and \xgs on the following two metrics: (1) the number of binary aggregations performed in each GNN layer; and (2) the size of data transfers between GPU threads to perform the aggregations.
Note that aggregating a neighbor's activations requires transferring the activations from GPU global memory to a thread's local memory.




Figure~\ref{fig:comapre_aggregation} shows the comparison results.
For GNNs with set aggregations, \xgs reduce the number of aggregations by 1.5-6.3$\times$ and the size of data transfers by 1.3-5.6$\times$. 
For GNNs with sequential aggregations, \xgs reduce aggregations and data transfers by up to 1.8$\times$ and 1.9$\times$, respectively.

Although the search algorithm finds a globally optimal \xg for sequential aggregations (Theorem~\ref{thm3}) and a $(1-1/e)$-approximation of globally optimal \xgs for set aggregations (Theorem~\ref{thm4}), we observe the performance improvement is more significant for set aggregations.
Optimality for \xgs with set aggregation involves more potential redundancy compared to sequential aggregations, due to permutation invariance of set aggregation.
 Thus higher performance can be achieved with \xgs for set aggregations, though optimal solutions are more difficult to compute.

It is also worth noting that the \xg search algorithm can find highly optimized \xgs even on very sparse graphs.
For example, on the COLLAB dataset with a graph density of 0.01\%, our algorithm reduces the number of aggregations and data transfers by 3.3$\times$ and 2.2$\times$, respectively. 
%

\subsection{Capacity}
\label{subsec:eval_para}
\begin{wrapfigure}{r}{0.4\linewidth}
    \vspace{-10mm}
    \centering
    \includegraphics[scale=0.3]{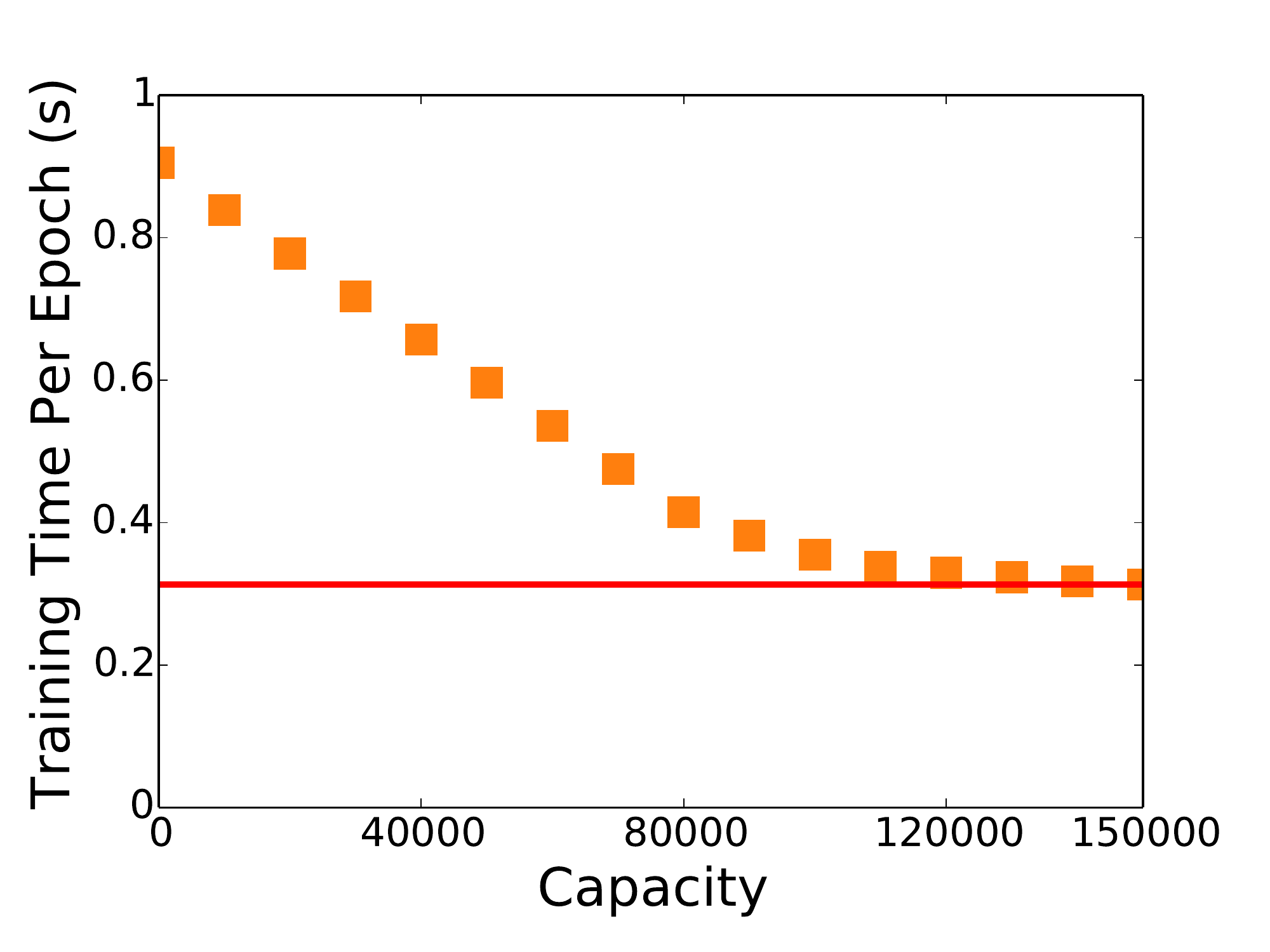}
    \vspace{-2mm}
    \caption{Comparing different \xgs and their per-epoch GCN training time on the COLLAB dataset. 
    The red line indicates the training time of the best discovered \xg by the search algorithm.}
    \label{fig:capacity}
    \vspace{-6mm}
\end{wrapfigure}

We study how different values of capacity affect the runtime performance of the generated \xgs. 
Recall that capacity is an upper bound on the number of aggregation nodes in a \xg.
In our HAG search algorithm, a larger value of capacity allows the algorithm to eliminate more redundant aggregations and therefore achieves lower cost.

Figure~\ref{fig:capacity} shows that a larger value of {\em capacity} can consistently improve the end-to-end training performance, which indicates that the cost function is an appropriate metric to evaluate and compare the performance of different \xgs.

By gradually increasing the capacity, the search algorithm eventually finds a \xg with $\sim$150K aggregation nodes, which consume 6MB of memory (0.1\% memory overhead) while improving the training performance by 2.8$\times$.
\vspace{-1mm}

\section{Conclusion}
We have introduced \xg, a new GNN graph representation to eliminate redundant computation and data transfers in GNNs.
We propose a cost function to quantitatively evaluate the runtime performance of different \xgs and use a \xg search algorithm to find optimized \xgs.
Our experiments show that \xgs significantly outperform existing GNN-graphs by improving the end-to-end training performance and reducing the aggregations and data transfers in GNN training.
\bibliographystyle{unsrtnat}
\bibliography{bibliography}
\appendix
\section{Proof of Theorem~\ref{thm2}}
\begin{proof}
It is sufficient to prove that if $\mathcal{N}(v) = \er{cover}(v)$ for all $v \in \m{V}$, then the GNN-graph $\m{G}$ and the \xg $\mw{G}$ generate the same outputs (i.e., $h_v^{(k)}$) for every GNN layer. 

We prove this by induction. Assume a GNN-graph $\m{G}$ and a \xg $\mw{G}$ generate the same outputs for the ($k$-1)-th layer, we prove the two graphs produce the same outputs for the $k$-th GNN layer.

In Algorithm 2, $\widehat{a}_v$ is the aggregation results of node $v$, which is defined as
\begin{eqnarray*}
\widehat{a}_v & = & \textproc{Aggregate}(h_u^{(k-1)} | u \in cover(v)) \\
& = & \textproc{Aggregate}(h_u^{(k-1)} | u \in \m{N}(v))
\end{eqnarray*}
This proves that Algorithm 1 and Algorithm 2 compute the same $a^{(k)}_v$. 
In addition, both algorithms use the same $\textproc{Update}$ function that takes $a^{(k)}_v$ and $h^{(k-1)}_v$ as inputs and computes $h^{(k)}_v$, which applies that the two algorithms compute the same $h^{(k)}_v$.
\end{proof}

\section{Proof of Theorem~\ref{thm3}}
\begin{proof}
Sequential aggregations require a specific ordering of a node's neighbors. Let $\m{N}_v$ denote the ordered list of node $v$'s neighbors and $\m{L}_v^{(i)}$ denote a list of the first $i$ elements in $\m{N}_v$:
$$
\m{L}_v^{(i)} = \big(\m{N}_v(1), \m{N}_v(2), ..., \m{N}_v(i)\big)
$$
where $\m{N}_v(i)$ is the $i$-th neighbor of node $v$.

$\m{L}_v^{(i)}$ represents a necessary intermediate aggregation step for computing $a^{(k)}_v$ (since sequential aggregations are not commutative), and therefore any \xg must compute $\m{L}_v^{(i)}$ as an intermediate aggregation.
Counting the number of distinct $\m{L}_v^{(i)}$ (where $v\in\m{V}$ and $1 < i \leq |\m{N}_v|$) provides a lower bound on the number of aggregations any equivalent \xg must perform. Assuming $\mw{G}_o$ is a globally optimal \xg under the cost model, we have:
$$
\er{cost}(\m{M}, \mw{G}_o) \geq \alpha_{\m{M}} \times \er{lb} + (\beta_{\m{M}} - \alpha_{\m{M}}) |\m{V}|
$$
where $\er{lb}$ is the number of distinct $\m{L}_v^{(i)}$ that must be computed by any equivalent \xg.

Assuming $\mw{G}$ is the output \xg of Algorithm 3, we prove that  $\er{cost}(\m{M}, \mw{G}) = \er{cost}(\m{M}, \mw{G}_o)$ by using contradiction.
In the case $\er{cost}(\m{M}, \mw{G}) > \er{cost}(\m{M}, \mw{G}_o)$, $\mw{G}$ must perform more than $lb$ aggregations. 

{\bf Case 1.} 
One possible case is that $\mw{G}$ computes at least one aggregation that is not a prefix of any $\m{N}_v$, indicating that $\mw{G}$ performs some useless aggregations, which contradicts with the fact that all intermediate aggregations added to $\mw{G}$ must be used at least once.

{\bf Case 2.}
The other possible case is that $\mw{G}$ computes the aggregation of some $\m{L}_v^{(i)}$ multiple times.
However, in Algorithm 3, each iteration reduces the number of aggregations by at least 1, and there are $|\m{E}|$ aggregations initially. 
This implies there cannot be redundant aggregations after $|\m{E}|$ iterations, which contradicts with the precondition of Case 2.
\end{proof}

\section{Proof of Theorem~\ref{thm4}}

\begin{proof}
The idea of the proof is to build a {\em monotone submodular function}~\cite{IntroAlg} based on the cost model. 

For any GNN-graph $\m{G}$ and an equivalent $\mw{G}$, we define
\begin{eqnarray}
\label{eqn0}
f(\mw{G}) & = & \er{cost}(\m{M}, \m{G}) - \er{cost}(\m{M}, \mw{G})  \\
& = & \alpha_{\m{M}} (|\m{E}| - |\mw{E}| + |\m{V}_A|) 
\end{eqnarray}
where $\m{V}_A$ is the set of aggregation nodes in $\mw{G}$, and $\m{E}$ and $\mw{E}$ are the set of edges in $\m{G}$ and $\mw{G}$, respectively.
$f(\mw{G})$ measures the number of aggregations that can be saved by using $\mw{G}$ for GNN training.

We begin by defining the subset relations between different \xgs. For two \xgs $\mw{G}$ and $\mw{G}'$, we define $\mw{G} \subseteq \mw{G'}$ iff $\m{V}_A$ is a subset of $\m{V}_A'$, where $\m{V}_A$ and $\m{V}_A'$ are the aggregation nodes in $\mw{G}$ and $\mw{G}'$, respectively.

{\bf Prove that $f(\mw{G})$ is monotone.} We show that for all $\mw{G} \subseteq \mw{G'}$, $f(\mw{G}) \leq f(\mw{G'})$. This is true since $\mw{G} \subseteq \mw{G'}$ indicates that $\mw{G}'$ contains all aggregation nodes in $\mw{G}$, which applies that $\mw{G}'$ can at least save the same number of aggregations as $\mw{G}$.

{\bf Prove that $f(\mw{G})$ is submodular.} We show that for all $\mw{G} \subseteq \mw{G'}$ and any aggregation node $n$, $f(\mw{G} + \{n\}) - f(\mw{G}) \geq f(\mw{G}' + \{n\}) - f(\mw{G}')$.
This inequality holds because $f(\mw{G} + \{n\}) - f(\mw{G})$ measures the number of aggregations we can further save by adding aggregation $n$ to the existing \xg, which monotonically decreases as we add more aggregation nodes to the \xg.

Let $\mw{G}^{(i)}$ denote the result \xg after the $i$-th iteration of Algorithm 3. $\mw{G}^{(i)}$ includes exactly $i$ aggregation nodes. Let $\mw{G}_o$ denote the optimal \xg under the cost model with $k$ aggregation nodes. We claim via induction that for $0\leq i \leq k$,
\begin{equation}
\label{eqn1}
f(\mw{G}_o) - f(\mw{G}^{(i)}) \leq (1 - 1/k)^i f(\mw{G}_o) 
\end{equation}

The base case is trivially true. In the $i$-th step, Algorithm 3 selects an aggregation node $a_i$ by maximizing the marginal gain $f(\mw{G}^{(i)} + a_i) - f(\mw{G}^{(i)})$. Observe that the remaining aggregation nodes includes $\mw{G}_o \setminus \mw{G}^{(i-1)}$, a set of at most $k$ elements. The submodularity applies that
$$
f(\mw{G}_o) - f(\mw{G}^{(i-1)}) \leq \sum_{a \in \mw{G}_o \setminus \mw{G}^{(i-1)}} \big( f(\mw{G}^{(i)} + a) - f(\mw{G}^{(i)} \big)
$$
and this implies that the aggregation node $a_i$ has marginal value
\begin{eqnarray*}
& & f(\mw{G}^{(i-1)} + a_i) - f(\mw{G}^{(i-1)}) \\
&\geq &\frac{1}{|\mw{G}_o \setminus \mw{G}^{(i-1)}|}\sum_{a\in \mw{G}_o \setminus \mw{G}^{(i-1)}}{\big( f(\mw{G}^{(i)} + a) - f(\mw{G}^{(i)} \big)} \\
&\geq & \frac{1}{k} \big( f(\mw{G}_o) - f(\mw{G}^{(i-1)})\big)
\end{eqnarray*}

Assuming that Inequality~\ref{eqn1} holds for $\mw{G}^{(i-1)}$, we have
\begin{eqnarray*}
f(\mw{G}_o) - f(\mw{G}^{(i)}) & = &f(\mw{G}_o) - f(\mw{G}^{(i-1)}) - \big( f(\mw{G}^{(i)}) - f(\mw{G}^{(i-1)}) \big)\\
& \leq & f(\mw{G}_o) - f(\mw{G}^{(i-1)} - \frac{1}{k} (f(\mw{G}_o) - f(\mw{G}^{(i-1)})) \\
& = & (1 - 1/k) (f(\mw{G}_o) - f(\mw{G}^{(i-1)})) \\
& \leq & (1-1/k)^i f(\mw{G}_o)
\end{eqnarray*}
which proves Inequality~\ref{eqn1}. Therefore, we have
$$
f(\mw{G}_o) - f(\mw{G}^{(k)}) \leq (1-1/k)^k f(\mw{G}_o) \leq e^{-1} f(\mw{G}_o)
$$
By taking in the definition of $f(\cdot)$, we have
$$
\er{cost}(\m{M}, \mw{G}) \leq \frac{1}{e} \er{cost}(\m{M}, \m{G}) + \frac{e-1}{e} \er{cost}(\m{M}, \mw{G}_o)
$$
\end{proof}

\section{Time Complexity of Algorithm 3}
\begin{theorem}
The overall time complexity of Algorithm 3 is $O(\er{capacity} \times |\m{V}| + |\m{E}| \times \log|\m{V}|)$.
\end{theorem}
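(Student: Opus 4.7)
The plan is to implement Algorithm 3 with a max-heap keyed by candidate binary aggregations, and then amortize the total cost across (i) the main while loop, (ii) the edge modifications on $\mw{E}$, and (iii) the initial heap construction. The heap stores every pair $(v_1, v_2)$ together with its current redundancy score; since the number of distinct pairs is bounded by $|\m{V}|^2$, the heap has depth $O(\log |\m{V}|)$, so every heap operation (find-max, insert, delete, decrease-key, increase-key) runs in $O(\log |\m{V}|)$ time. An auxiliary hash table mapping each pair to its current heap index supports the key-update operations in the same asymptotic time.

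First I would bound the work in the main while loop. The loop executes at most $\er{capacity}$ iterations by the termination condition. Each iteration performs one heap query in $O(\log |\m{V}|)$ time for the $\argmax$ on line 12, and a scan over $v \in \m{V}$ on lines 17--20 in $\Theta(|\m{V}|)$ time to find the downstream nodes where both $(v_1, u)$ and $(v_2, u)$ appear. Summed over iterations, this scan contributes $O(\er{capacity} \cdot |\m{V}|)$, which dominates the per-iteration heap query.

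Next I would bound the cost triggered by the edge changes themselves. The key structural observation is that whenever a pair with redundancy $k \geq 2$ is merged, the $k$ affected downstream nodes each lose two edges and gain one, while the new aggregation node gains two incoming edges, for a net change of $2-k \leq 0$ in $|\mw{E}|$. Therefore once the body of the if-statement on line 13 fires, $|\mw{E}|$ is non-increasing, so the cumulative number of edge insertions and deletions performed by the algorithm is $O(|\m{E}| + \er{capacity})$. Charging each edge event an $O(\log |\m{V}|)$ heap update yields $O(|\m{E}| \log |\m{V}|)$ for all heap-related work, including the initial heap build from the edges of $\m{G}$.

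The main obstacle is that a single edge change on $(v, u)$ in principle alters the redundancy of every pair $(v, v')$ where $v'$ is another current in-neighbor of $u$, so a naive per-edge charge can overcount by a factor of $\deg^-(u)$. I would resolve this by fusing the pair-redundancy updates into the line-17 scan that already pays $\Theta(|\m{V}|)$ per iteration: when the scan identifies an affected $u$ and rewrites its incoming edges to route through $w$, the associated heap decrements for pairs $(v_1, v')$ and $(v_2, v')$ and increments for $(w, v')$ are performed in the same pass, and their aggregate cost is charged to the edges that actually vanish from $\mw{E}$. Combining the three bounds then gives the claimed $O(\er{capacity} \cdot |\m{V}| + |\m{E}| \cdot \log |\m{V}|)$.
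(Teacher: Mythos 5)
Your accounting follows the paper's proof almost line for line: the same max-heap over candidate pairs with $O(\log|\m{V}|)$ per operation (justified, as in the paper, by the $O(|\m{V}|^2)$ bound on the number of pairs), the same $O(\er{capacity})$ bound on heap queries, the same $\Theta(|\m{V}|)$ per-iteration scan yielding the $O(\er{capacity}\times|\m{V}|)$ term, and the same count of $O(|\m{E}|+\er{capacity})$ edge insertions/deletions, each charged one $O(\log|\m{V}|)$ heap update, combined in the same way at the end.

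Where you genuinely depart from the paper is in flagging the fan-out problem: a single edge change incident to $u$ perturbs the redundancy score of every pair formed from $u$'s in-neighbors, so ``one heap update per edge event'' undercounts. This is a real observation --- the paper's proof silently assumes exactly this one-update-per-event correspondence --- but your repair does not close the gap. At an affected node $u$, the scan must issue roughly $3|\mw{N}_u|$ key updates (decrements for $(v_1,v')$ and $(v_2,v')$ and increments for $(w,v')$, over all other in-neighbors $v'$ of $u$), while only two edges are deleted there; charging the aggregate to those deletions assigns $O(|\mw{N}_u|\log|\m{V}|)$, not $O(\log|\m{V}|)$, to each vanishing edge, so the total heap-maintenance cost is not bounded by $O(|\m{E}|\log|\m{V}|)$. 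Folding the updates into the line-17 scan does not help either: the number of key updates in one iteration is $\sum_{u}O(|\mw{N}_u|)$ over affected $u$, which is bounded only by $|\mw{E}|$, not by $|\m{V}|$. The initial heap construction has the same defect, since enumerating all pairs with nonzero redundancy costs $\sum_u \binom{|\mw{N}_u|}{2}$, which can be $\Theta(|\m{V}|^2)$ already for one high-degree node, exceeding $O(|\m{E}|\log|\m{V}|)$. In short, your proof matches the paper's wherever the paper's argument is airtight, and the one step you add --- the charging argument intended to justify the assumption both proofs need --- is the step that fails.
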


\begin{proof}
We use a {\em heap} to maintain the redundancy score of each potential node pair and only update the heap when we add and remove edges in $\mw{E}$.
Since the depth of the heap is at most $O(\log|\m{V}|)$~\footnote{This is because there can be at most $O(|\m{V}|^2)$ node pairs.}, querying the most redundant binary aggregation and modifying $\mw{E}$ each takes $O(\log|\m{V}|)$ time.

First, we calculate the number of queries and updates to the heap structure:
\begin{itemize}
\item The algorithm iteratively pull the most redundant binary aggregation from the heap and add it to $\m{V}_A$. Since the number of vertices in $\m{V}_A$ is smaller than $\er{capacity}$, the total number of queries is $O(\er{capacity})$.
\item The algorithm inserts two new edges into $\mw{E}$ in line 16 and removes one edge from $\mw{E}$ in line 19. Since line 16 can be invoked at most $O(\er{capacity})$ times, the total number of invocations to line 19 is $O(|\m{E}| + 2\times \er{capacity})$. Therefore, the overall number of updates is $O(|\m{E}| + \er{capacity})$.
\end{itemize}

Second, the enumeration over all vertices in $\m{V}$ (line 17) involves time complexity of $O(\er{capacity} \times |\m{V}|)$. Therefore, the overall time complexity of Algorithm 3 is 
\begin{eqnarray*}
& & O\big(\er{capacity} \times |\m{V}| + (|\m{E}| + \er{capacity} ) \times \log|\m{V}|\big) \\
& & = O(\er{capacity} \times |\m{V}| + |\m{E}| \times \log|\m{V}|)
\end{eqnarray*}
\end{proof}

\end{document}